\theoremstyle{plain}
\newtheorem{theorem}{Theorem}[section]
\newtheorem{lemma}[theorem]{Lemma}
\newtheorem{corollary}[theorem]{Corollary}
\theoremstyle{definition}
\theoremstyle{remark}
\icmltitlerunning{FedHQL: Federated Heterogeneous Q-Learning}
\begin{document}

\twocolumn[
\icmltitle{FedHQL: Federated Heterogeneous Q-Learning}



\icmlsetsymbol{equal}{*}

\begin{icmlauthorlist}
\icmlauthor{Flint Xiaofeng Fan}{NUS1,ETH,Astar}
\icmlauthor{Yining Ma}{NUS1}
\icmlauthor{Zhongxiang Dai}{NUS1}
\icmlauthor{Cheston Tan}{Astar}
\icmlauthor{Bryan Kian Hsiang Low}{NUS1}
\icmlauthor{Roger Wattenhofer}{ETH}

\end{icmlauthorlist}

\icmlaffiliation{NUS1}{National University of Singapore, Singapore}
\icmlaffiliation{ETH}{ETH
Zurich, Switzerland}
\icmlaffiliation{Astar}{Institute for Infocomm Research, Singapore}

\icmlcorrespondingauthor{Flint Xiaofeng Fan}{fxf@u.nus.edu}

\icmlkeywords{Federated Reinforcement Learning, Federated Q-learning, Federated Learning}

\vskip 0.3in
]



\printAffiliationsAndNotice{}  

\begin{abstract}
Federated Reinforcement Learning (FedRL) encourages distributed agents to learn collectively from each other’s experience to improve their performance without exchanging their raw trajectories. The existing work on FedRL assumes that all participating agents are homogeneous, which requires all agents to share 
the same policy parameterization (e.g., network architectures and training configurations). 
However, in real-world applications, agents are often in disagreement about the architecture and the parameters, possibly also because of disparate computational budgets.
Because homogeneity is not given in practice, we introduce the problem setting of Federated Reinforcement Learning with Heterogeneous And bLack-box agEnts (FedRL-HALE).
We present the unique challenges this new setting poses and propose the Federated Heterogeneous Q-Learning (FedHQL) algorithm that principally addresses these challenges.
We empirically demonstrate the efficacy of FedHQL in boosting the sample efficiency of heterogeneous agents with distinct policy parameterization
using standard RL tasks.
\end{abstract}

\section{Introduction}\label{sec:introduction}
Leveraging on the growing literature of \emph{federated learning} (FL) \citep[etc.]{mcmahan2017communication-FL-0,konevcny2016federated-FL-1,kairouz2021advances-FL-2}, \emph{federated reinforcement learning} (FedRL) 
\cite{zhuo2019federated}
has recently become an emerging approach
to enable collective intelligence \citep{yahya2017collective-distributed-RL} 
in sequential decision-making problems
with a large number of distributed reinforcement learning (RL) agents. 
In contrast to the conventional setting of \emph{distributed} RL in which the raw trajectories of agent-environment interactions may be collected by a central server \citep[etc.]{nair2015massively-DRL-1,mnih2016asynchronous-DRL-2,espeholt2018impala-DRL-3,horgan2018-distributed-DRL-4,chen2022byzantine-DRL}, FedRL, as a specialization of distributed RL, places its emphasis 
on the accessibility of the raw trajectories of agents in real-world domains such as e-commerce and healthcare.
For example, in the healthcare domain, sharing trajectories corresponding to the medical records of patients is prohibitive \cite{rieke2020future-nature}.
Thus,
applications like the clinical decision support system proposed by
\citet{xue2021resource-FRL-clinical} need to utilize FedRL to extract knowledge from electronic medical records from distributed sources where the raw records are inaccessible directly. 
%
FedRL may also be applied to practical applications where transmitting RL trajectories is infeasible due to limited hardware capacities and transmission bandwidth.
For instance, \citet{FedRL-1} develops a FedRL server in the cloud for a group of robots
to learn to navigate cooperatively through the world 
without transmitting their 
observation trajectories to the server to save bandwidth.
Besides those promising practical applications, many endeavors have been made to study the theoretical aspects of FedRL. For example,
FedRL has recently been theoretically proved to be effective in improving the sample efficiency of RL agents proportionally with respect to the number of agents with performance guarantees \citep{fan2021fault-FRL}; the work of \citet{jin2022federated-FRL-aistats} has proved the convergence of FedRL algorithms with agents operating in distributed environments with different state-transition dynamics;
\citet{khodadadian2022federated-FRL-icml} has further proved that FedRL can provide a linear 
convergence speedup with respect to the number of agents under Markovian noise.

Despite their promising theoretical results \citep{fan2021fault-FRL,jin2022federated-FRL-aistats,khodadadian2022federated-FRL-icml} and practical applications \citep{FedRL-2,FedRL-1,yu2020deep-FedRL-3,wang2020federated-FedRL-5,FedRL-building,liang2019federated-FedRL-Car-YQ}, 
current FedRL algorithms explicitly assume that all participants are \emph{homogeneous}, 
which requires all agents to share the same policy parameterization (e.g., the architecture of the policy neural network, including the number of layers, the activation function, etc.) and the same training configurations for the policy (e.g., the learning rate).
%
Such an assumption can be a significant limitation in
real-world applications where 
agents are often \emph{heterogeneous}, due to various factors such as different computational budgets, different assessments of the difficulty of the task, etc.
%
%
To solve this problem,
we propose a federated version of Q-learning \citep{watkins1989learning-Q-learning} that works with $N$ distributed agents that are heterogeneous in terms of a number of factors including the policy parameterization (e.g., the policy network architecture), the policy training configurations (e.g., the learning rate) and the exploration strategy to manage the trade-off between exploration and exploitation.

Another important issue faced by existing FedRL algorithms results from the assumption that the server has full knowledge about the policy-related details of agents, which is a privacy concern to the agents. 
In the aforementioned previous works on FedRL, the server has access to information such as the architectures of the policy networks, the details on how the agents train and update their policy networks, and the exploration strategy of the agents to trade off exploration and exploitation.
Such information may reveal critical information about the agents.
For example, consider a FedRL application in the financial market where the agents are different organizations. Knowing that one organization utilizes a computationally expensive transformer network  \cite{chen2021decision-transformer} may 
imply the organization's relations with customers and excellent financial standing.
To address this issue, our proposed federated Q-learning algorithm treats each agent as a black-box expert whose 
policy network parameterization, training configuration and exploration strategy
are hidden from any other party including the server.
%

In this work, we propose the problem setting of \emph{\textbf{Fed}erated \textbf{R}einforcement \textbf{L}earning with \textbf{H}eterogeneous \textbf{A}nd {b}\textbf{L}ack-box {a}g\textbf{E}nts} (FedRL-HALE).
In the setting of FedRL-HALE, which is illustrated in Fig.~\ref{fig:fedRL-diagram}, every agent is free to choose its own preferred policy parameterization, training configurations, and exploration strategy.
Furthermore, the agents in the setting of FedRL-HALE are black-box experts whose policy-related information, such as those mentioned above, is not shared with any other party including the server. 
To achieve collaborative learning of heterogeneous and black-box agents in the setting of FedRL-HALE, we introduce the \textbf{Fed}erated \textbf{H}eterogeneous \textbf{Q}-\textbf{L}earning (FedHQL) algorithm.
To allow the server to aggregate information from different agents and use it to improve their learning, we propose (\emph{a}) the Federated Upper Confidence Bound (FedUCB) algorithm, which is deployed by the server as a subroutine of the FedHQL algorithm, and (\emph{b}) a federated version of Temporal Difference (FedTD) which is calculated by the server and then broadcast to all agents to improve their policies.

Specifically, we make the following contributions:

\textbf{1.} We 
introduce
the problem formulation of FedRL-HALE and discuss its technical challenges (Sec.~\ref{sec:problem-setting}).

\textbf{2.} We develop the FedUCB algorithm, which provides a principled strategy for \emph{inter}-agent exploration to balance the trade-off between exploration and exploitation in the setting of Fed-HBA (Sec.~\ref{sec:fed-ucb-q}).

\textbf{3.} We propose the FedHQL algorithm, which deploys FedUCB as a subroutine and is the first FedRL algorithm catered to heterogeneous and black-box agents (Sec.~\ref{subsec:algorithm}).

\textbf{4.} We conduct extensive empirical evaluations to demonstrate the efficacy of FedHQL in improving the sample efficiency of agents using standard RL tasks (Sec.~\ref{sec:experiments}). 

\begin{figure}[t]
    \centering
    \includegraphics[height=1.8in] {./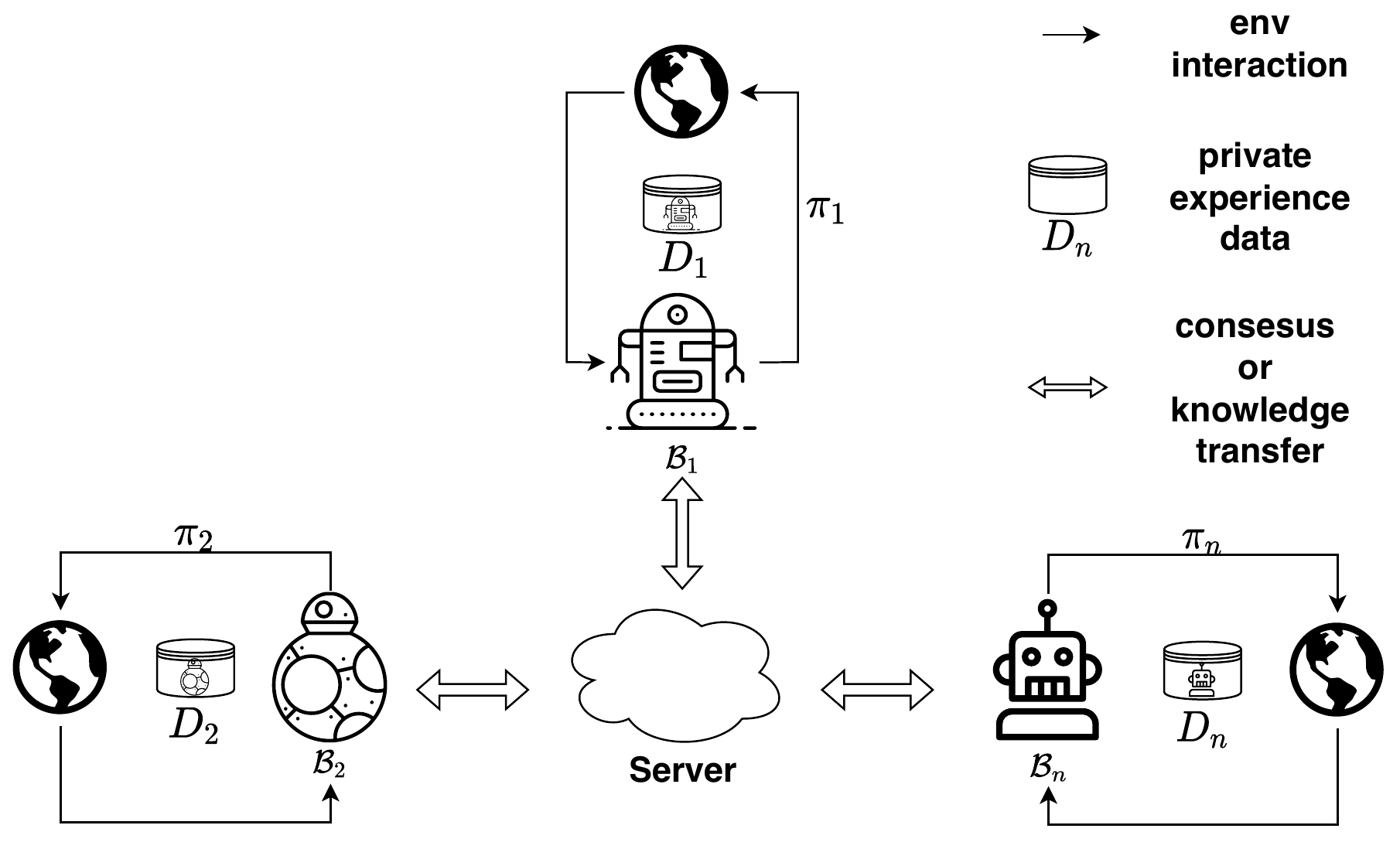}
    \caption{Graphical illustration of the FedRL-HALE problem setup (Sec.~\ref{subsec:problem-formulation}). $\mathcal{B}_n$ represents a black-box agent whose policy-related information 
    and
    private experience data $D_n$ are not accessible to any other party.
    A central server 
    performs
    certain knowledge 
    aggregation
    process in order to improve the performance of participating agents without accessing the aforementioned private information of the agents. 
    }
    \label{fig:fedRL-diagram}
\end{figure}

\section{Preliminaries}\label{sec:background}
\subsection{Markov Decision Processes}\label{def:MDP}
We model Reinforcement Learning (RL) as a discrete-time episodic Markov Decision Process (MDP) \citep{sutton2018reinforcement-RL}: $M \triangleq \{\mathcal{S}, \mathcal{A}, \mathcal{P}, \mathcal{R}, \gamma, \rho, H\}$ where $\mathcal{S}$ and $\mathcal{A}$ are the state space and action space of the RL task, respectively. 
$\mathcal{P}(s^\prime | s, a)$ defines the transition probability of the environment,
$\mathcal{R}(s,a): \mathcal{S} \times \mathcal{A} \mapsto[0, R] $ denotes the reward function,
$\gamma\!\in\!(0,1)$ is the discount factor,
$\rho$ represents the initial state distribution, and $H$ denotes the task horizon of an episode.
An agent behaves according to its policy $\pi$, where $\pi(a|s)$ defines the probability of the agent choosing action $a$ at state $s$. 
$\tau \triangleq \{s_0, a_0, s_1, a_1, ..., s_{H-1}, a_{H-1}|s_0\!\sim\!\rho\}$ is a trajectory of 
agent-environment interactions.
%
The return function $R_t \triangleq\sum_{i=0}^{H  } \gamma^{i} \mathcal{R}\left(s_{t+i}, a_{t+i}\right)$ gives the cumulative discounted reward from timestep $t$ in $\tau$ that an RL algorithm aims to maximize.

\subsection{Q-learning}\label{subsec:q-learning}
One of the most important breakthroughs in RL was the development of \emph{Q-learning} \cite{watkins1989learning-Q-learning,sutton2018reinforcement-RL} which 
launched the field of deep RL by using neural networks as function approximators~\citep{mnih2013playing-DQN}. The core of this model-free RL algorithm is to iteratively apply the \emph{Bellman equation} to update the \emph{action-value function} $Q$:
\begin{align}\label{eq:q-update-bg}
    Q(s_t, a_t) \leftarrow Q&(s_t, a_t) + \alpha \Big[\mathcal{R}(s_{t+1}, a_{t+1})  \notag \\
    &+ \gamma \max_a Q(s_{t+1}, a) - Q(s_t, a_t) \Big]
\end{align}
where $Q(s,a)\!:\!\mathcal{S} \times \mathcal{A} \mapsto \mathbb{R}$ represents the value of an action $a$ at state $s$, and $\alpha$ is the learning rate. 
Then a decision policy $\pi(a_t|s_t)$ can be obtained via \emph{exploiting} the updated $Q$:
\begin{equation}
\label{eq:action:selection:greedy}
    a_t = \arg \max_a Q(s_t, a) .
\end{equation}
The \emph{optimal action} at state $s_t$ is defined as $a^*_t = \arg \max_a Q^*(s_t, a)$
where $Q^*(s_t, a)$ is the \emph{optimal Q-function} which gives the expected return for starting in state $s$, taking action $a$, and following the policy thereafter.

One key characteristic of Q-learning is that the optimization step is always performed \emph{off-policy} meaning that each update of Eq.~\eqref{eq:q-update-bg} can use any data samples collected at any point, regardless of how the agent deals with the exploration-exploitation dilemma (Sec.~\ref{subsec:exploration-problem}) 
to collect the samples.
%
\subsection{The Exploration-Exploitation Dilemma}\label{subsec:exploration-problem}
During the learning process, when an agent makes a decision that seems optimal according to Eq.~\eqref{eq:action:selection:greedy}, it is \emph{exploiting} its current knowledge which is assumed to be reliable; the agent may also selects an action that seems sub-optimal for now to potentially \emph{explore} unseen states, making the assumption that its current knowledge could be incomplete and inaccurate. 
This 
is known as the \emph{exploration-exploitation dilemma}, which is omnipresent in many sequential decision-making problems.
%
To balance the trade-off between exploration and exploitation, a variety of exploration strategies has been developed,
such as $\epsilon$-greedy and Boltzmann exploration~\cite{cesa2017boltzmann}.
Among them, the $\epsilon$-greedy algorithm is widely used in Q-learning as its exploration strategy:
\begin{align*}
    a_t = \begin{cases}\arg \max _{a} {Q}(s_t, a) & \text { with probability } 1-\epsilon, \\ \text {random action } a & \text { with probability } \epsilon\end{cases}
\end{align*}
where the coefficient $\epsilon$ is a hyper-parameter that controls the degree of exploration.

One optimism-based exploration strategy is the Upper Confidence Bound (UCB) algorithms \citep{lattimore2020bandit-book},
which is established based on the principle of optimism in the face of uncertainty.
The action chosen by UCB at time step $t$ is given by:
\begin{equation}\label{eq:action-selection}
    a_t = \arg \max_a \Big[Q(s_t, a) + c\sqrt{\frac{\log t}{N_t(a)}} \Big]
\end{equation}
where $Q(s_t,a)$ is the current estimated value of action $a$ at time step $t$; $N_t(a)$ is the number of times action $a$ has been selected prior to time $t$; $c$ is a confidence constant that controls the level of exploration.
The UCB algorithms construct a high-probability upper bound on $Q(s_t, a)$ and automatically balance exploration and exploitation as more knowledge of the environment is gathered.
%

\section{FedRL-HALE}\label{sec:problem-setting} 
\subsection{Problem formulation}\label{subsec:problem-formulation}
Consider the task of \emph{federatively} solving a sequential decision-making problem represented by the MDP $M \triangleq \{\mathcal{S}, \mathcal{A}, \mathcal{P}, \mathcal{R}, \gamma, \rho, T\}$ defined in Sec.~\ref{def:MDP}. 
Let the set $\mathcal{B} \triangleq\{Q_n(a |\ell_n(s;{D}_n, \omega_n ))\}_{n=1}^{N}$ denote a group of $N$ distributed heterogeneous and black-box agents. 
Following the same setup as the existing literature of FedRL (Sec.~\ref{sec:introduction}), each agent $\mathcal{B}_n$
independently operates in a separate copy of the underlying MDP $M$ following its policy $\pi_n$, and generates its private experience data $D_n \triangleq \{(s, a, s^{\prime}, r)_i\}_{i=1}^{|D_n|}$. 
%
Each action valuator $Q_n(a |\ell_n(s;{D}_n, \omega_n ))$ consists of a non-linear function $\ell_n(s;{D}_n, \omega_n)$ which predicts the value of action $a$ given a state $s$.
The non-linear function $\ell_n(s;{D}_n, \omega_n)$ is parameterized by a set of parameters $\omega_n$ and learned using the private experience data ${D}_n$.

A prominent example of such a non-linear function $\ell_n$ is a neural network, in which $\omega_n$ represents its weights.
Due to the \emph{heterogeneity} among agents, different agents may choose different neural network architectures and employ different optimization methods to train their networks. 
\emph{For example, agent $\mathcal{B}_1$ favours a Transformer-like network architecture and trains it with the Adam Optimizer \cite{chen2021decision-transformer}; agent $\mathcal{B}_2$ can choose to use a 3-layer multi-layer perceptron (MLP) as the network and train it using conventional stochastic gradient ascent; meanwhile, agent $\mathcal{B}_3$ who also adopts an MLP as the network may utilize an SVRG-type \cite{johnson2013svrg1,papini2018stochastic-svrpg} optimizer for training.} 
{As aforementioned, the neural network $\ell_n(s; {D}_n,\omega_n)$ is trained 
on a private set of experience data $D_n
$ for agent $\mathcal{B}_n$. And we note that in any given state $s$, the neural network-based action valuator $Q_n(a |\ell_n(s;{D}_n, \omega_n ))$ is able to predict the value of any action $a$.}
To simplify exposition, in the remainder of the paper, we use $Q_n(s,a)$ to denote the action valuator $Q_n(a |\ell_n(s;{D}_n, \omega_n ))$.
%


Similar to the settings in the existing FedRL works \cite{fan2021fault-FRL,jin2022federated-FRL-aistats,khodadadian2022federated-FRL-icml}, a reliable central server is available to coordinate the FedRL process and is allowed to interact with another separate copy of the underlying MDP \cite{fan2021fault-FRL}. 
At any point during the FedRL training process, the server may query all \emph{black-box} agents with a selected state $\bar{s}$, and then collect from every agent $\mathcal{B}_n$ the action values 
$Q_n(\bar{s}, a)$
for all actions $a$'s.\footnote{{The query could be a batch of states sent together in one communication round in practice. Here we omit the batch size and illustrate our FedDRL-HBA setting (Sec.~\ref{sec:problem-setting}) and our FedHQL algorithm (Sec.~\ref{sec:FedHQL}) with the case of a single state for brevity.}}
The server then needs to design a mechanism for aggregating the collected information and broadcasting it back to all agents to aid their individual training.
Of note, the information regarding the non-linear function $\ell_n$ (including its architecture, weights $\omega_n$, training methods, and other training details), as well as the local experience data $D_n$, is not revealed to any other party, including the central server.

Let $R_n$ and $\overline{R}_{{n}}$ denote the performances achievable by agent $\mathcal{B}_n$ through independent learning and FedRL-HALE, respectively.
Let $|D_n|$ and $|\overline{D}_n|$ denote, respectively, the number of agent-environment interactions
required to reach $R_n$ through independent learning and to reach $\overline{R}_n$ through FedRL-HALE.
$|\overline{D}_s|$
denotes the total number of interactions incurred at the server. We define two objectives in the setting of FedRL-HALE:

(1) \emph{from the perspective of improving the overall system welfare}: 
We aim to improve the average performance of all agents, given a fixed budget on the number of interactions with the whole system:
\begin{align}\label{eq:objective-1}
     \underbrace{\frac{\sum_n \overline{R}_n}{N}}_{\text{FedRL-HALE}} \geq \underbrace{\frac{\sum_n {R}_n}{N}}_{\text{self-learning}}
     s.t.\ \underbrace{|\overline{{D}}_s| + \sum_n |\overline{D}_n| }_{\text{FedRL-HALE}}  \leq \underbrace{\sum_n  |{D}_n|.}_{\text{self-learning}}
\end{align}

(2) \emph{from the perspective of each participating agent $\mathcal{B}_n$} who aims to improve its performance with \emph{less} agent-environment interactions compared to that of independent learning: 
We aim to achieve:
\begin{align}\label{eq:objective-2}
     \underbrace{|\overline{D}_n|}_{\text{FedRL-HALE}}   \leq \underbrace{|{D_n}|}_{\text{self-learning}} s.t.\ \underbrace{\overline{R}_n}_{\text{FedRL-HALE}} \geq \underbrace{R_n.}_{\text{self-learning}}
\end{align}


\subsection{Technical Challenges}\label{sec:technical-challenges}
Here we discuss unique technical challenges that arise in the above-proposed setting of FedRL-HALE.

\textbf{Black-box Optimization.} Since the details of policy network architecture, training configuration, and exploration strategy of every agent are not revealed to any other party including the server,
it is challenging to construct and optimize the objective function of FedRL-HALE,
resulting in a distributed black-box optimization problem \cite{bajaj2021-blackbox-review}.
It is thus infeasible to solve the FedRL-HALE problem with conventional gradient-based methods \cite{fan2021fault-FRL} 
or parameter-based methods \cite{khodadadian2022federated-FRL-icml} for knowledge aggregation.
{Meanwhile,} the nature of black-box optimization renders conventional federated aggregation methods from the federated learning literature, such as federated averaging (FedAvg) \cite{mcmahan2017communication-FL-0}, inapplicable in FedRL-HALE. 

To this end, 
we propose a federated version of Q-learning (Sec.~\ref{subsec:q-learning}) that aggregates 
the action value estimations of different agents $Q_n$'s on the states queried by the central server.
The knowledge encoded in the aggregated action value estimation, denoted as $\bar{Q}$, is then broadcast back to every agent for its individual policy improvement.
\textbf{Inter-agent Exploration.} The exploration-exploitation dilemma (Sec.~\ref{subsec:exploration-problem}) requires an agent to design a principled way 
to balance
between exploiting its current knowledge and exploring to acquire new knowledge, which we will refer to as the 
\emph{intra}-agent exploration problem.
Similarly, {we note that in the setting of FedRL-HALE, the exploration-exploitation dilemma needs to be further considered} when the server aggregates information from all agents, i.e., when the server selects its action to interact with the environment.
This is because the actions selected by the server determine the specific state-action pairs whose value estimates $Q_n$'s gets improved {via federation} for all agents (Section \ref{sec:FedHQL}).
Therefore, a natural trade-off arises when the server selects its action:
\emph{
Should the server select actions by exploiting the current information provided by all agents, i.e., select actions which are deemed promising by all agents? Or should the server select exploratory actions for which the agents have inconsistent (i.e., high-variance) value estimations?
}
This additional exploration-exploitation dilemma similarly highlights the requirement for a principled algorithm to balance the trade-off between exploiting the current knowledge of the entire group of agents and exploring to obtain new knowledge, which we will denote as \emph{inter}-agent exploration.

In this paper, to design the \emph{inter}-agent exploration strategy, we leverage the well-celebrated optimism-based 
exploration strategy of upper confidence bound (UCB) (Sec.~\ref{subsec:exploration-problem}),
and develop a Federated UCB (FedUCB) algorithm which provides a principled way to balance the trade-off between exploration and exploitation when aggregating the knowledge of the group of agents.



%

\section{FedHQL}\label{sec:FedHQL}
To achieve
collaborative learning of heterogeneous and black-box agents in the
setting of FedRL-HALE and
address its unique technical challenges (Section \ref{sec:technical-challenges}), we propose {a novel} \textbf{Fed}erated \textbf{H}eterogeneous \textbf{Q}-\textbf{L}earning (FedHQL) algorithm in this section. 
We will firstly discuss each component of FedHQL in detail and then present the overal algorithm in Sec.~\ref{subsec:algorithm}. 

\subsection{Federated Q-learning}
At the core of FedHQL is the federated version of Q-learning with $N$ heterogeneous and black-box agents. Each agent $\mathcal{B}_n$ \emph{independently} interacts with its own copy of the MDP using its preferred \emph{intra}-agent exploration strategy.
Each agent $\mathcal{B}_n$ updates its current estimation of action values $Q_n(s,a)$ through Q-learning as follows:
\begin{align}\label{eq:q-update-n}
    Q_n(s_t, a_t) \leftarrow & Q_n (s_t, a_t) + \alpha_n [\mathcal{R}(s_{t+1}, a_{t+1})  \notag \\
    &+ \gamma \max_a Q_n(s_{t+1}, a) - Q_n(s_t, a_t) ].
\end{align}
Of note, every agent is free to choose any \emph{arbitrary} policy parameterization (e.g., any policy network architecture for learning $Q_n$), training configurations for the policy network, and exploration strategy (e.g., Boltzmann exploration, $\epsilon$-greedy with different values of $\epsilon$, etc.).

To facilitate knowledge aggregation, we let the central server broadcast query state(s) to agents and query their estimations of the values of all actions at these candidate states (i.e., $Q_n(s,a),\forall a$). 
Then, the server can combine the knowledge of the entire group of agents by aggregating the received action value estimations $Q_n(s,a)$'s.
As mentioned in Sec.~\ref{sec:technical-challenges}, this aggregation step is faced with the exploration-exploitation challenge, calling for a principled \emph{inter}-agent exploration strategy, which we will discuss next.



\subsection{Federated Upper Confidence Bound (FedUCB)}\label{sec:fed-ucb-q}
In this section, we introduce our principled strategy for \emph{inter}-agent exploration, i.e., for selecting the server actions (Section \ref{sec:technical-challenges}).

Let $b,c >0$ be constants.
For a state $s$, after all action value estimations $Q_n(s,a),\forall a$ are received, we compute the following for every action $a$:
\begin{align}
\bar{Q}(s,a) &= \frac{1}{N}\sum_{n=1}^{N} Q_n(s,a), \nonumber \\
    \mathbb{V}_{s,a} &= \frac{1}{N} \sum^{N}_{n=1}[\bar{Q}(s,a) - Q_n(s,a)]^2, \nonumber \\
    Q^{\text{UCB}}(s,a) &\triangleq \bar{Q}(s,a)  + \sqrt{\frac{2c\mathbb{V}_{s,a}}{N}} + \frac{3bc}{N}. \label{eq:fed-ucb}
\end{align}
After that, the FedUCB policy then chooses the action $\bar{a}$ using: $\bar{a} \leftarrow \arg\max_a Q^{\text{UCB}}(s,a)$. 
{To understand the insights behind Eq.~(\ref{eq:fed-ucb}),} let us begin with the following lemma.
\begin{lemma}[Adapted from Theorem 1 in \citet{audibert2009exploration-TCS} ]\label{lemma}
Let $X_1,..., X_N$ be i.i.d. random variables taking their values in $[0, b]$. Let $\mu = \mathbb{E} [X_1]$ be their common expected value. Consider the empirical mean $\bar{X}_N$ and variance $\mathbb{V}_N$ defined 
by
\begin{equation}
\bar{X}_t=\frac{\sum_{n=1}^N X_n}{N} \text { and } \mathbb{V}_N=\frac{\sum_{n=1}^N\left(X_n-\bar{X}_N\right)^2}{N}.
\end{equation}
Then, for any $N \in \mathbb{N}$ and $\chi > 0$, with probability at least $1-3e^{-\chi}$,
\begin{equation}
\left|\bar{X}_N-\mu\right| \leq \sqrt{\frac{2 \mathbb{V}_N \chi}{N}}+\frac{3 b \chi}{N}.
\end{equation}
\end{lemma}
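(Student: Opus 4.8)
The plan is to prove this empirical Bernstein inequality by combining a standard concentration bound for the deviation of the empirical mean with a control on how the empirical variance $\mathbb{V}_N$ relates to the true variance. The cited source (Theorem~1 in \citet{audibert2009exploration-TCS}) establishes exactly this, so my strategy is to reconstruct the main ideas rather than invent them from scratch. First I would recall Bennett's (or Bernstein's) inequality, which states that for i.i.d. bounded random variables with true variance $\sigma^2 = \mathrm{Var}(X_1)$, the empirical mean concentrates around $\mu$ at a rate governed by $\sigma^2$ and the range $b$: with probability at least $1 - e^{-\chi}$,
\begin{equation*}
\left|\bar{X}_N - \mu\right| \leq \sqrt{\frac{2\sigma^2 \chi}{N}} + \frac{b\chi}{3N}.
\end{equation*}
The difficulty is that $\sigma^2$ is unknown, so the bound must be made \emph{empirical} by substituting $\mathbb{V}_N$ for $\sigma^2$, and the cost of this substitution is what produces the slightly larger constant $3b\chi/N$ in the statement.

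The key intermediate step, and the one I expect to be the main obstacle, is to bound the gap between the empirical standard deviation $\sqrt{\mathbb{V}_N}$ and the true standard deviation $\sigma$. The standard route is to apply a concentration inequality to the sum of squared deviations and show that, with high probability, $\sigma \leq \sqrt{\mathbb{V}_N} + \sqrt{c'\, b^2\chi/N}$ for an appropriate constant $c'$. This requires a separate tail bound (again of Bernstein type, applied to the variables $(X_n - \mu)^2$, which lie in $[0, b^2]$) and careful tracking of constants so that the final additive term collapses cleanly to $3b\chi/N$. Because each of the two concentration events (one for the mean, one for the variance) holds only with high probability, I would allocate the failure probability across them.

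The final assembly proceeds by a union bound over the two (or three) failure events. Each concentration inequality fails with probability at most $e^{-\chi}$; combining the deviation-of-mean bound with the deviation-of-variance bound and invoking the union bound yields a total failure probability of at most $3e^{-\chi}$, which matches the constant $3$ appearing in the statement of the lemma. Substituting the empirical-variance bound into the Bernstein bound for $\bar{X}_N$, and absorbing the lower-order terms into the stated constants, produces the desired inequality
\begin{equation*}
\left|\bar{X}_N - \mu\right| \leq \sqrt{\frac{2\mathbb{V}_N \chi}{N}} + \frac{3b\chi}{N}
\end{equation*}
with probability at least $1 - 3e^{-\chi}$. Since the result is quoted directly from the literature, my proof would amount to citing \citet{audibert2009exploration-TCS} for the precise constants and sketching this mean-plus-variance concentration argument; the genuine work lies entirely in the variance-deviation step and in bookkeeping the constants so that the three failure events sum to $3e^{-\chi}$.
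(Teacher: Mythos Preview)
Your proposal is correct and, in fact, goes further than the paper itself: the paper does not reproduce any argument but simply states that the proof is provided in \citet{audibert2009exploration-TCS}. The sketch you give (Bernstein for the mean, a separate concentration bound relating $\mathbb{V}_N$ to $\sigma^2$, then a union bound over the failure events to get the $3e^{-\chi}$) is exactly the structure of the cited proof, so your proposal subsumes the paper's treatment.
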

Lemma~\ref{lemma} has been used by variance-based UCB algorithms for multi-armed bandits, and its proof is provided in \citet{audibert2009exploration-TCS}.
Inspired by Lemma~\ref{lemma}, 
we have the following:
\begin{theorem} 
\label{theorem}
    For any $s$ and $a$, assume that $\{Q_n(s,a)\}^N_{n=1}$ are i.i.d. samples drawn from a distribution whose expectation is the optimal Q function at $s$ and $a$: $Q^*(s,a)$.
    Denote $\mu_{s,a} \triangleq \mathbb{E}[Q_n(s,a)] = Q^*(s,a)$.
    Also assume that $Q_n(s,a)\in[0,b],\forall s,a,n$.
    Then for any $c > 0$,
    with probability at least $1 - 3e^{-c}$, we have
    \begin{equation*}
        |\bar{Q}(s,a) - \mu_{s,a}| \leq  \sqrt{\frac{2c\mathbb{V}_{s,a}}{N}} + \frac{3bc}{N}.
    \end{equation*}
\end{theorem}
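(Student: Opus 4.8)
The plan is to recognize the statement as a direct instantiation of Lemma~\ref{lemma}, so the work reduces to setting up the correct correspondence between the two settings and verifying that each hypothesis of the lemma is satisfied. First I would fix an arbitrary state-action pair $(s,a)$ and define the random variables $X_n \triangleq Q_n(s,a)$ for $n = 1,\dots,N$. By the theorem's standing assumptions these $X_n$ are i.i.d., they take values in $[0,b]$, and their common expectation is $\mu = \mu_{s,a} = Q^*(s,a)$ — which are precisely the three conditions demanded by Lemma~\ref{lemma}.

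Next I would match the empirical quantities. With $X_n = Q_n(s,a)$, the empirical mean $\bar{X}_N = \frac{1}{N}\sum_{n=1}^N X_n$ coincides with $\bar{Q}(s,a)$ as defined in Eq.~\eqref{eq:fed-ucb}, and the empirical variance $\mathbb{V}_N = \frac{1}{N}\sum_{n=1}^N (X_n - \bar{X}_N)^2$ coincides term by term with $\mathbb{V}_{s,a} = \frac{1}{N}\sum_{n=1}^N [\bar{Q}(s,a) - Q_n(s,a)]^2$, since squaring is insensitive to the sign of the difference. This identification is the single spot where I would slow down: I would confirm that the paper's variance is centered about the \emph{empirical} mean $\bar{Q}(s,a)$ rather than the true mean $\mu_{s,a}$, so that it is exactly the $\mathbb{V}_N$ appearing in the lemma and not some other second moment.

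Finally I would invoke Lemma~\ref{lemma} with the parameter choice $\chi = c$. The lemma then guarantees that with probability at least $1 - 3e^{-\chi} = 1 - 3e^{-c}$,
\begin{equation*}
|\bar{X}_N - \mu| \leq \sqrt{\frac{2\mathbb{V}_N \chi}{N}} + \frac{3b\chi}{N},
\end{equation*}
and substituting $\bar{X}_N = \bar{Q}(s,a)$, $\mu = \mu_{s,a}$, $\mathbb{V}_N = \mathbb{V}_{s,a}$, and $\chi = c$ reproduces the claimed bound verbatim. Since $(s,a)$ was arbitrary, the conclusion holds for every state-action pair.

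Because all the analytic effort — the concentration inequality itself — is inherited from Lemma~\ref{lemma}, there is no genuine technical obstacle here; the result is essentially a translation of an empirical-Bernstein bound into the federated notation. The one point genuinely worth flagging is conceptual rather than computational: the hypothesis that $\{Q_n(s,a)\}_{n=1}^N$ are i.i.d.\ with common mean equal to $Q^*(s,a)$ is a strong idealization, given that the agents are heterogeneous and their estimates are neither identically distributed nor guaranteed unbiased toward $Q^*$. I would therefore be explicit that the guarantee is contingent on this modeling assumption rather than something to be established within the proof.
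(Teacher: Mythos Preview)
Your proposal is correct and matches the paper's own proof essentially line for line: the paper also sets $X_n = Q_n(s,a)$, identifies $\bar{Q}(s,a)$ and $\mathbb{V}_{s,a}$ with the empirical mean and variance of Lemma~\ref{lemma}, and invokes the lemma with $\chi = c$. Your added remarks (checking that $\mathbb{V}_{s,a}$ is centered at the empirical mean, and flagging the strength of the i.i.d.\ assumption) are sound and go slightly beyond what the paper writes, but the underlying argument is identical.
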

\begin{proof}
The proof follows directly from Lemma~\ref{lemma} 
by setting $X_n = Q_n(s,a)$, 
$\bar{Q}(s,a) = \frac{1}{N}\sum_{n=1}^{N} Q_n(s,a)$,
and \\
$\mathbb{V}_{s,a} = \frac{1}{N} \sum^N_{n=1}[\bar{Q}(s,a) - Q_n(s,a)]^2$.
\end{proof}
The assumption that $\{Q_n(s,a)\}^N_{n=1}$ are i.i.d. samples from a distribution whose expectation is $Q^*(s,a)$ is justified because every agent $\mathcal{B}_n$ aims to independently optimize its $Q_n$ such that it can estimate the optimal Q function $Q^*$.
The assumption that $Q_n(s,a)\in[0,b]$ is also easily satisfied as long as the reward function is non-negative and upper-bounded.
For every pair of $(s,a)$, Theorem~\ref{theorem} provides a high-probability upper bound on the difference between the aggregated action value $\bar{Q}(s,a)$ and the optimal action value, resulting in the following corollary:
\begin{corollary}[FedUCB]\label{corollary}
    Under the same assumptions and notations as Theorem \ref{theorem},
    for any $c > 0$, with probability at least $1 - 3e^{-c}$, we have
    \begin{equation*}
        \mu_{s,a} \triangleq Q^*(s,a) \leq Q^{\text{UCB}}(s,a) \triangleq \bar{Q}(s,a)  + \sqrt{\frac{2c\mathbb{V}_{s,a}}{N}} + \frac{3bc}{N}.
    \end{equation*}
\end{corollary}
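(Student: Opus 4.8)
The plan is to derive the corollary as an immediate consequence of Theorem~\ref{theorem} by dropping the absolute value and keeping only the one-sided bound that we actually need. Theorem~\ref{theorem} guarantees that, with probability at least $1 - 3e^{-c}$, the two-sided inequality $|\bar{Q}(s,a) - \mu_{s,a}| \leq \sqrt{2c\mathbb{V}_{s,a}/N} + 3bc/N$ holds. The corollary only asserts an upper bound on $\mu_{s,a}$, so I would simply observe that the two-sided bound implies, in particular, the lower-tail inequality $\mu_{s,a} - \bar{Q}(s,a) \leq \sqrt{2c\mathbb{V}_{s,a}/N} + 3bc/N$, which is one of the two inequalities packaged inside the absolute value.

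First I would unpack the absolute value: $|\bar{Q}(s,a) - \mu_{s,a}| \leq \delta$ (where $\delta \triangleq \sqrt{2c\mathbb{V}_{s,a}/N} + 3bc/N$) is equivalent to $-\delta \leq \bar{Q}(s,a) - \mu_{s,a} \leq \delta$. The left inequality rearranges to $\mu_{s,a} \leq \bar{Q}(s,a) + \delta$. Substituting the definition of $\delta$ back in yields exactly
\begin{equation*}
    \mu_{s,a} \leq \bar{Q}(s,a) + \sqrt{\frac{2c\mathbb{V}_{s,a}}{N}} + \frac{3bc}{N} = Q^{\text{UCB}}(s,a),
\end{equation*}
where the final equality is just the definition of $Q^{\text{UCB}}(s,a)$ from Eq.~\eqref{eq:fed-ucb}. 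Finally I would note that since this follows from the same high-probability event as Theorem~\ref{theorem}, it holds with the same probability at least $1 - 3e^{-c}$, and the assumption $\mu_{s,a} \triangleq Q^*(s,a)$ inherited from the theorem supplies the identification of $\mu_{s,a}$ with the optimal Q-value.

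There is essentially no hard part here: the corollary is a purely logical weakening of Theorem~\ref{theorem}, trading the symmetric two-sided concentration bound for the single upper-tail statement that is relevant to optimism-based action selection. The only thing to be careful about is the direction of the inequality when stripping the absolute value — it is the lower-tail event $\bar{Q}(s,a) - \mu_{s,a} \geq -\delta$ that gives the desired upper bound on $\mu_{s,a}$, not the upper-tail event — but this is a one-line rearrangement rather than a genuine obstacle. The substantive mathematical content (the concentration inequality itself) has already been established in Theorem~\ref{theorem}, which in turn rests on Lemma~\ref{lemma}; the corollary merely reinterprets that bound as the optimistic estimate $Q^{\text{UCB}}$ used by the FedUCB action-selection rule.
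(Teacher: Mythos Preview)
Your proposal is correct and matches the paper's treatment: the paper does not give a separate proof of the corollary at all, presenting it as an immediate consequence of Theorem~\ref{theorem}, which is exactly what you do by extracting the one-sided inequality from the two-sided concentration bound.
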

Corollary~\ref{corollary} suggests that the optimal value of action $a$ at state $s$, $Q^*(s,a)$, is upper-bounded by $ Q^{\text{UCB}}(s,a)$ defined in Eq.~\eqref{eq:fed-ucb} with high confidence. 
Inspired by Corollary~\ref{corollary}, 
we develop our practical FedUCB algorithm for the knowledge aggregation in FedRL-HALE, which firstly calculates (for any $s,a$):
\begin{align}
    \bar{Q}(s,a) &= \frac{1}{N}\sum_{n=1}^{N} Q_n(s,a), \label{eq:Q-mean}\\
    {Q}^{\text{std}}(s,a) &= \sqrt{\frac{1}{N} \sum_{n=1}^{N}[\bar{Q}(s,a) - Q_n(s,a)]^2}, \label{eq:Q-std}\\
    Q^{\text{UCB}}(s,a) & \simeq \underbrace{\bar{Q}(s,a)}_{\text{exploitation}}+ \lambda  \underbrace{Q^{\text{std}}(s,a)}_{\text{exploration}}, \label{eq:fed-AUCB}
\end{align}
and then selects the server action by:
\begin{align}\label{eq:fed-action-selection}
    \bar{a}_t \leftarrow \arg\max_a Q^{\text{UCB}}(s_t,a).
\end{align}
Note that the selected action $\bar{a}_t$ will determine the state-action pair whose value estimate $Q_n(s_t,\bar{a}_t)$ gets improved for all agents (details in Section \ref{subsec:fed:td}).
Therefore, the FedUCB algorithm \eqref{eq:fed-action-selection} achieves both \emph{exploitation} of the knowledge from all agents by preferring actions deemed promising by all agents (i.e., large $\bar{Q}(s,a)$) and \emph{exploration} of those actions for which the value estimations from all agents are inconsistent (i.e., large $Q^{\text{std}}(s,a)$).
The degree of exploration is controlled by the parameter $\lambda$, which we will refer to as \emph{inter}-agent exploration coefficient, such that a larger $\lambda$ encourages the selection of more exploratory actions.


\subsection{Federated Temporal Difference (FedTD)}
\label{subsec:fed:td}
With the FedUCB derived above , the server is able to optimistically select an action that leads to 
high
returns with high probability. 
Inspired by \citet{fan2021fault-FRL}, we let the server operate in another separate copy of the underlying MDP and execute the selected action $\bar{a}$, hence generating a new sample $(s_t,\bar{a},s_{t+1},r_t)$.
This new sample will then be used to perform a federated version of Temporal Difference (FedTD) learning, which is defined as follows:
\begin{align}\label{eq:fed-TD}
    \bar{Q}(s_t,\bar{a}_t) & \leftarrow \bar{Q}(s_t,\bar{a}_t) + \nonumber \\ & {\alpha_{\text{s}}} \Big(r_t + \gamma \max_{b} \bar{Q}(s_{t+1},b) - \bar{Q}( s_t,\bar{a}_t)\Big)
\end{align}
where $\bar{Q}(s_t,a_t)$, defined in Eq.~\eqref{eq:Q-mean}, represents the agents' current estimation of the action value at time $t$, {$\alpha_{\text{s}}$ is the learning rate of FedTD}, and $r_t + \gamma \max_b\bar{Q}(s_{t+1},b)$ is the Temporal Difference (TD) target \cite{sutton2018reinforcement-RL}.
%
%
To calculate $\bar{Q}(s_{t+1},b)$ which is used in Eq.~\eqref{eq:fed-TD}, the server sends the state $s_{t+1}$ to all agents and collects the value estimations of all actions at $s_{t+1}$ from all agents, after which $\bar{Q}(s_{t+1},b)$ can be again calculated using Eq.~\eqref{eq:Q-mean}.
If the action executed $\bar{a}_t$ is not optimal, then it gets penalized by receiving a smaller $r_t$. 
Essentially, FedTD regularizes the action selected by FedUCB, $\bar{a}_t$, at each time step $t$ by updating $\bar{Q}(s_t, \bar{a}_t)$ with TD target $r_t + \gamma \max_b\bar{Q}(s_{t+1},b)$. 

At each time step, after the FedTD target \eqref{eq:fed-TD} is calculated, it will be broadcast to all agents so that they can use it to update their action value estimations 
$Q_n$'s,
which is discussed next.

\subsection{Individual Improvement}
One objective of FedRL-HALE is to improve the sample efficiency of the individual participating agents.
To achieve this,
after the FedTD target $\bar{Q}(s_t,\bar{a}_t)$ is updated following Eq.~\eqref{eq:fed-TD}, 
we let the server broadcast the updated $\bar{Q}(s_t,\bar{a}_t)$ back to all agents.
An agent $\mathcal{B}_n$ will then update its own action value estimation 
$Q_n$ 
using the following regression loss:
\begin{align}
    \mathcal{L}_n \triangleq \|\bar{Q}(s_t,\bar{a}_t) - Q_n(s_t,\bar{a}_t)\|^2 \label{eq:individual-improvement-loss}
\end{align}
This loss serves as a regularizer that periodically updates agent $\mathcal{B}_n$'s parameter $\omega_n$ by
\begin{align}\label{eq:individual-improvement-update}
    \omega_n \leftarrow \omega_n - \tilde{{\alpha}}_n \nabla\mathcal{L}_n
\end{align}
where $\tilde{{\alpha}}_n$ is a step-size hyper-parameter.
This loss essentially helps the agent to improve its knowledge about action $\bar{a}_t$ at state $s_t$ using the knowledge aggregated by FedUCB and updated by FedTD. {In FedHQL, each agent performs $\kappa$ steps of gradient updates in Eq.~\eqref{eq:individual-improvement-update}.}

\subsection{FedHQL Algorithm}\label{subsec:algorithm}
\begin{figure}[b]
    \centering
    \includegraphics[width=3in] {./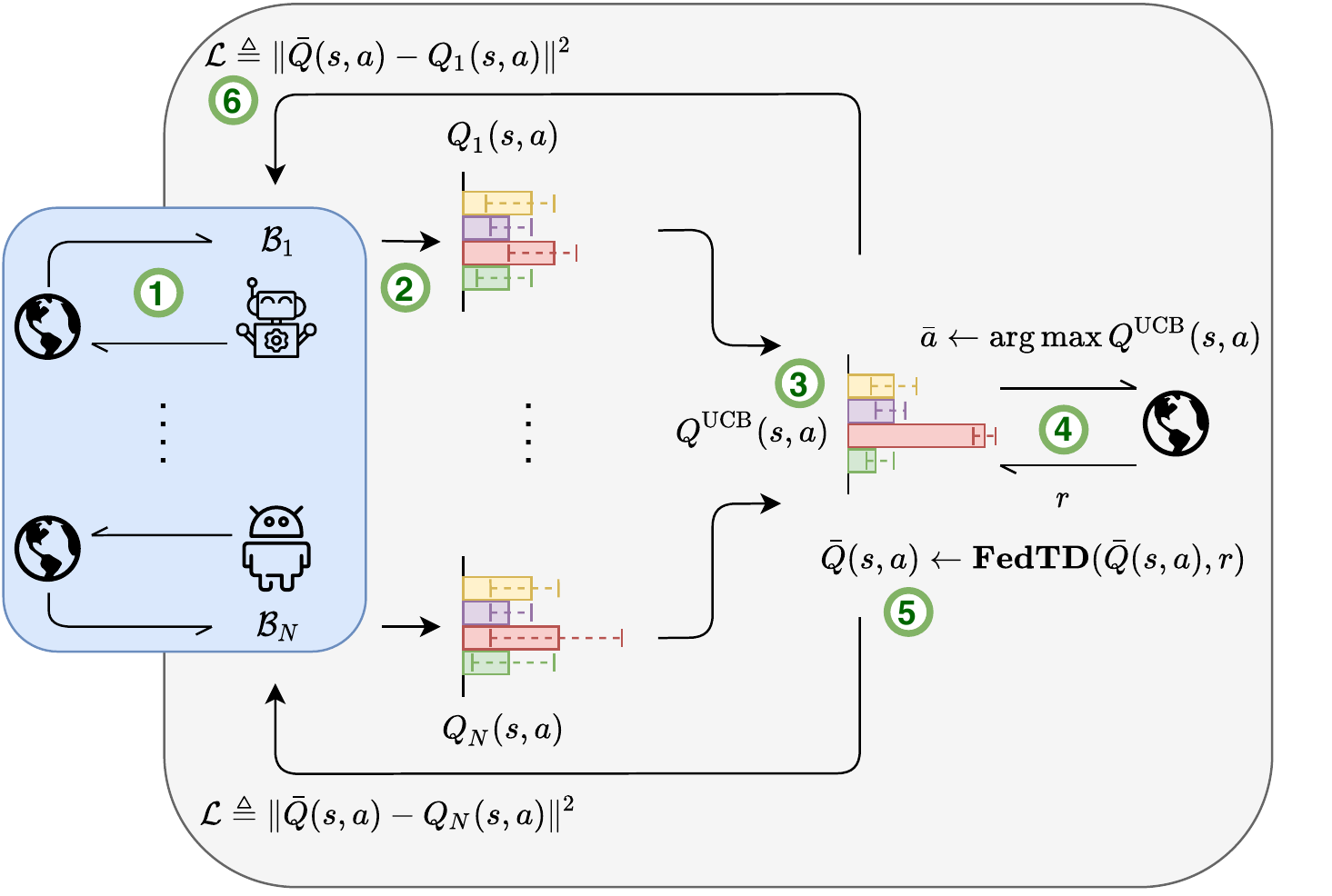}
    \caption{Graphical illustration of FedHQL.}
    \label{fig:fedHQL}
\end{figure}
We are now ready to present the complete algorithm of FedHQL.
Fig.~\ref{fig:fedHQL} gives an illustration of FedHQL, and we describe the details of every step below.
One federation round $t$ in FedHQL starts by the server broadcasting a query state $s_t$ to all agents:

$\bullet$ \quad \textbf{\textcircled{{1}}\textcircled{{2}} (by Agents):} 
At any point of FedHQL, every agent $\mathcal{B}_n$ \emph{independently} performs self-learning according to Eq.~\eqref{eq:q-update-n} with any arbitrary choice of policy parameterization, training configurations, and intra-exploration strategy (step \textcircled{{1}}). When a query state $s_t$ is received from the server, 
agent $\mathcal{B}_n$ sends to the server $Q_n(s_t,a),\forall a$, i.e., its current estimation of the action values at the query state $s_t$ for all actions $a$'s (step \textcircled{{2}}).

$\bullet$ \quad \textbf{\textcircled{{3}}\textcircled{{4}}\textcircled{{5}} (by Central Server):}
After receiving the action value estimations from all agents, for every action $a$, the server computes $\bar{Q}(s_t,a)$, ${Q}^{\text{std}}(s_t,a)$, and $Q^{\text{UCB}}(s_t,a)$ according to Eq.~\eqref{eq:Q-mean}, Eq.~\eqref{eq:Q-std}, and Eq.~\eqref{eq:fed-AUCB} (step \textcircled{{3}}). 
An action $\bar{a}$ is then selected and executed by the server according to Eq.~\eqref{eq:fed-action-selection}, which generates the sample $(s_t,\bar{a}_t,s_{t+1},r_t)$ (step \textcircled{{4}}). 
Next, to prepare for the calculation of the FedTD target \eqref{eq:fed-TD}, the server broadcasts the state $s_{t+1}$ to all agents, and then receives from all agents their value estimations at the state $s_{t+1}$ for all actions, i.e., $\{Q_n(s_{t+1},a),\forall a\}^N_{n=1}$.
This allows the server to calculate $\bar{Q}(s_{t+1},a),\forall a$ again using Eq.~\eqref{eq:Q-mean}.
Then, the server performs FedTD to update $\bar{Q}(s_t,\bar{a}_t)$ according to Eq.~\eqref{eq:fed-TD} and broadcasts the updated $\bar{Q}(s_t,\bar{a}_t)$ to all agents (step \textcircled{{5}}).

$\bullet$ \quad \textbf{\textcircled{{6}} (by Agents):}
Whenever $\bar{Q}(s_t, \bar{a}_t)$ is received, agent $\mathcal{B}_n$ performs the individual improvement for $\kappa$ steps to update its own
$Q_n$
according to the loss function defined in Eq.~\eqref{eq:individual-improvement-loss} and gradient updates in Eq.~\eqref{eq:individual-improvement-update} (step \textcircled{{6}}). 

{This federation process (\textcircled{{2}}-\textcircled{{6}}) is repeated until $s_t$ is a terminal state or the maximum federation time horizon $H_\text{fed}$ is reached. After a federation round, every agent continues to repeat the independent self-learning (step \textcircled{{1}}) followed by the federation (\textcircled{{2}}-\textcircled{{6}}), until all the budgets for environment interactions are exhausted.} Of note, the above process can be implemented {in asynchronous batches} 
using the reliable broadcast protocol from distributed computing \cite{chang1984reliable}.
Due to space constraints, the pseudocode of FedHQL is
deferred to Appendix~\ref{appendix:sec:algorithm-pseudocode}.

\section{Evaluation}\label{sec:experiments}
In this section, we empirically verify the effectiveness of the FedHQL algorithm
using the classical control tasks of CartPole and LunarLander from the OpenAI gym environment \cite{brockman2016openai-gym}.
%
We design our experiments with respect to the two objectives of FedRL-HALE 
(Sec.~\ref{subsec:problem-formulation}):
\textbf{I}. Given a fixed budget on the total number of interactions with the whole system,
does FedHQL improve the average performance of all agents?    
\textbf{II}. If an agent participates in the FedRL-HALE setup, does the FedHQL algorithm improve the sample efficiency of the agent with high probability?


\subsection{Experimental Settings}
To the best of our knowledge, this is the first FedRL algorithm that is able to aggregate knowledge from \emph{heterogeneous} and \emph{black-box} agents.
We thus compare the performance of 
the agents
\emph{with} FedHQL and \emph{without} federation (independent self-learning) which we refer to as DQN (w.o. Fed). 
We initialize $N=5$ heterogeneous 
agents and use the default implementation of classical deep Q-learning (DQN) from the \emph{stable baselines} library \cite{raffin2021-stable-baselines3} as the backbone training algorithm for each agent.

\begin{table}[ht]
\centering
  \caption{Configurations of $N=5$ agents}
  \label{tab:configuration}
  \resizebox{0.45\textwidth}{!}{%
  \begin{tabular}{cccc}\toprule
    \textit{Agent No.} & \textit{Network} &  \textit{Learning rates} &  \textit{Intra-exploration coefficient} \\ \midrule
    1 & 64x64 (Tanh) & 0.005 & 0.01 \\
    2 & 128x128 (ReLU) & 0.01 & 0.1 \\
    3 & 32x32 (Tanh) & 0.01 & 0.05 \\
    4 & 16x16 (ReLU) & 0.02 & 0.01 \\
    5 & 8x8x8 (ReLU) & 0.001 & 0.01 \\ \bottomrule
  \end{tabular}
  }
\end{table}

To study the sensitivity of FedUCB to the \emph{inter}-agent exploration coefficient $\lambda$, we run our FedHQL with different values of $\lambda = 0,1,3,5,10 $. 
The $\epsilon$-greedy algorithm (Sec.~\ref{subsec:exploration-problem}) is chosen to be the \emph{intra}-agent exploration strategy for all agents.
In all of our experiments, agents are configured differently and the server has no knowledge about any agent.
The configurations of the agents are depicted in Table~\ref{tab:configuration}, in which in the second column, 64x64 (Tanh) denotes a two-layer network with 64 neurons in each layer and the Tanh activation function. 
The policy networks (architectures and activation functions), learning rates, and intra-exploration coefficients of different agents are \emph{arbitrarily} set to different values to simulate heterogeneity of agents in practical applications where  agents with various computational budgets may have different assessments of the task.

\begin{figure*}[ht]
    \centering
    \includegraphics[width=6.7in] {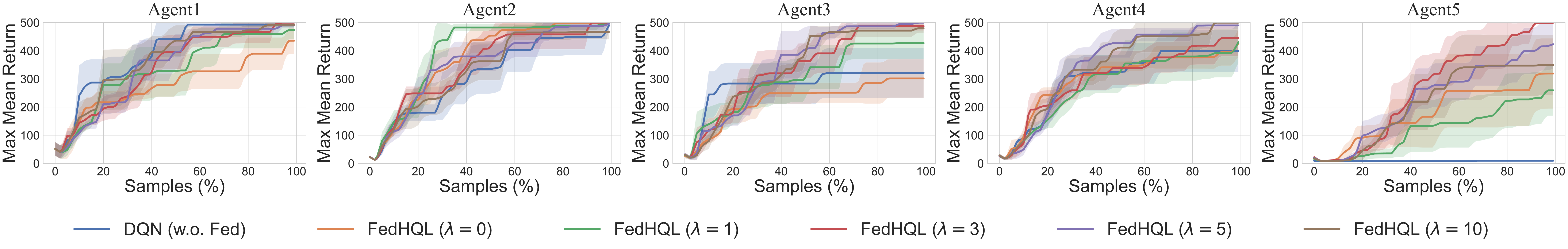} 
\par\medskip
    \includegraphics[width=6.7in] {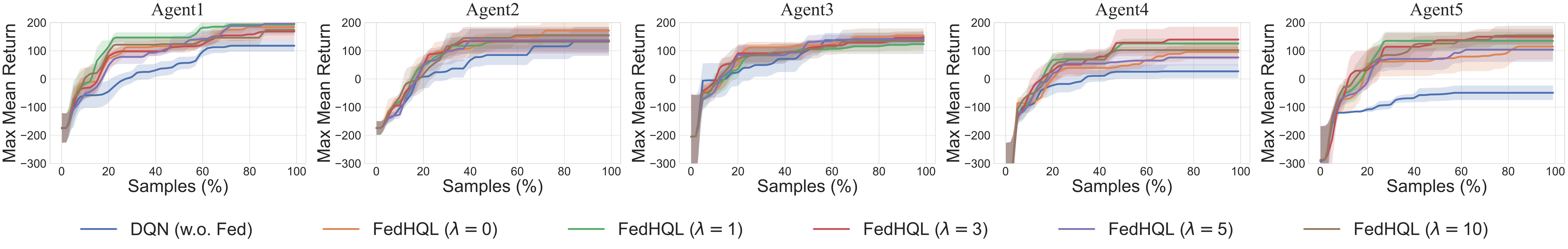}
    \caption{Individual learning curves of ($N=5$) agents running FedHQL with different inter-agent exploration coefficients (FedHQL ($\lambda\!=\!0,1,3,5,10$) against independent training (DQN (w.o. Fed)) on the CartPole (top) and LunarLander (bottom) environments.
    }
    \label{fig:exp1-individual}
\end{figure*}

Pertaining to the budget of each agent, 
we stipulate a fixed budget of maximum $2\times10^6$ environment interactions for CartPole and $4\times10^6$ environment interactions for LunarLander.
%
%
%
%
The discount factor is set to be 0.999 for CartPole and 0.990 for LunarLander.
{
The hyper-parameters of FedHQL are set as follows: we use number of gradient steps $\kappa$=64, maximum federation time horizon $H_\text{Fed}$=16, FedTD learning rate $\alpha_\text{s}$=0.05, and we set the batch size of the queries to be 128. The number of self-learning steps is set to be 5k for CartPole and 10k for LunarLander. And we update the target network of the backbone DQN algorithm every 1k steps for CartPole and 2k steps for LunarLander.} Other experimental details follow the default settings of \emph{stable baselines} \cite{raffin2021-stable-baselines3} and are kept the same for every set of experiments across different runs for a fair comparison.
In all experiments, we report the metric \emph{max mean return}, in which we compute the average/mean performance of every 10 test episodes during training and report the max value among them. 
Each experiment is repeated with 5 independent runs with 80\% bootstrap confidence intervals using random seeds from 0 to 4.

\subsection{Efficacy in Improving System Welfare}\label{subsec:exp-object-1}
We firstly investigate the efficacy of FedHQL in improving the system welfare, i.e., the first objective of FedRL-HALE
(Sec.~\ref{subsec:problem-formulation}).
In particular, given the fixed budget of each agent, we examine the average performance of agents versus the average consumption of the budget per agent.
The results in both tasks are plotted in Fig.~\ref{fig:exp1}.
Of note, the sample cost incurred at the server is also included in the computation of the budget of an agent in FedHQL following~\eqref{eq:objective-1}.
The figures show that FedHQL with different choices of \emph{inter}-agent exploration coefficients, FedHQL ($\lambda = 0,1,3,5,10$), significantly improves the average performance per agent over that of independent self-learning, DQN (w.o. Fed).
For example, in the LunarLander task, an agent is expected to consume at least $40\%$ of its budget (i.e., total $1.6\text{m} = 4\times 10^6 \times0.4$ interactions) on average to receive positive returns while an agent in FedHQL ($\lambda=1$) can achieve a performance close to 100 using only about $20\%$ of its budget (i.e., total $0.8\text{m} = 4\times 10^6 \times0.2$ interactions).
%
%
%
The results also suggest that FedHQL is less sensitive to the choice of $\lambda$ in the LunarLander task compared to the CartPole task, which we think is because 
any degree of \emph{inter}-agent exploration (i.e., collaboration among agents) would help significantly in the more difficult task of LunarLander.
\begin{figure}
    \centering
    \includegraphics[width=3in]{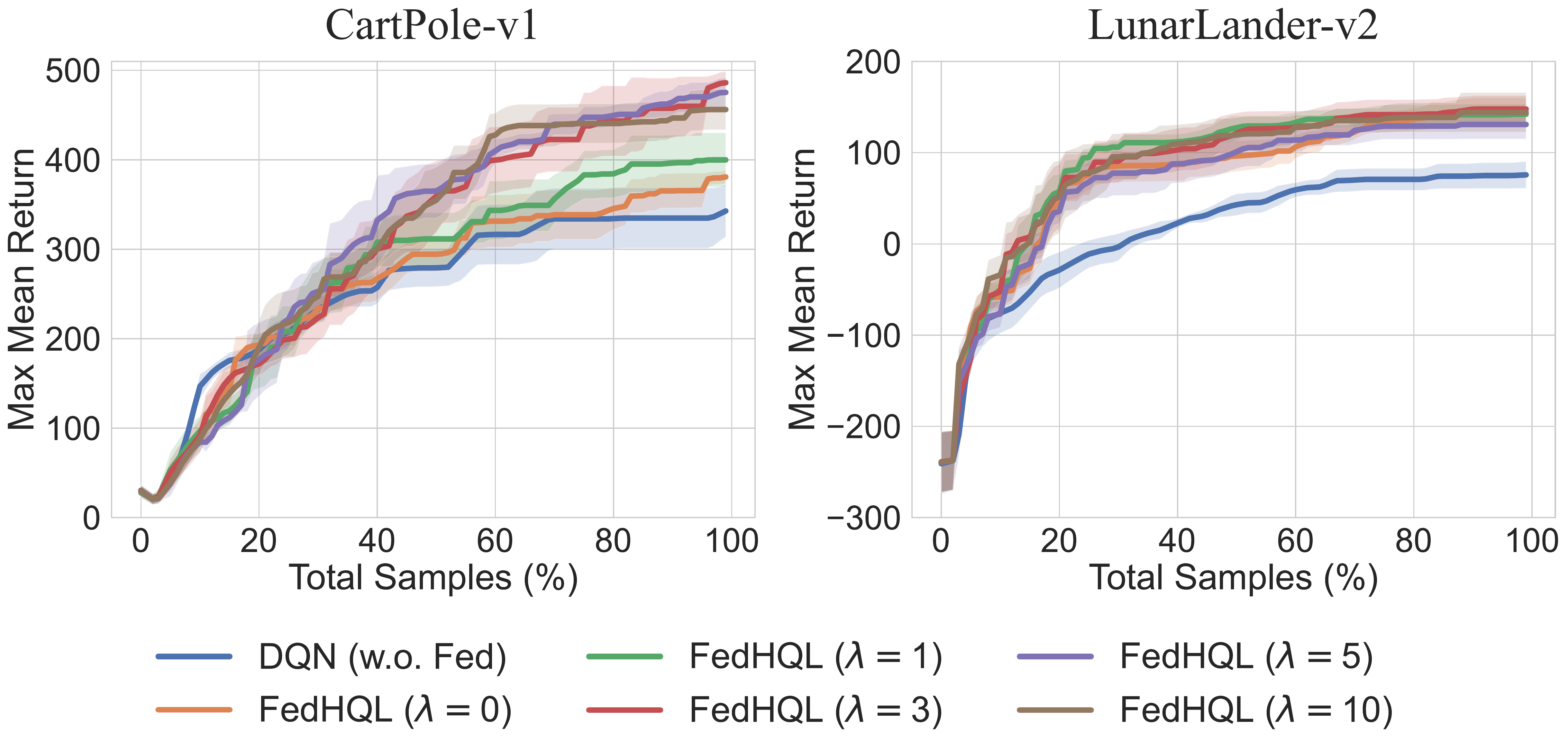}
    \caption{Learning curves of FedHQL ($N=5$ agents) with different inter-agent exploration coefficients (FedHQL ($\lambda\!=\!0,1,3,5,10$) against independent self-learning of agents (DQN (w.o. Fed)) on the CartPole (\emph{left}) and LunarLander (\emph{right}) tasks. }
    \label{fig:exp1}
\end{figure}
\subsection{Effectiveness in Improving Individual Agents}\label{subsec:exp-object-2}
Next, we verify the effectiveness of FedHQL in boosting the sample efficiency of each participating agent with high probability, i.e., the second objective of FedRL-HALE 
(Sec.~\ref{subsec:problem-formulation}).
In this set of experiments, we report the performance versus the consumed samples of each individual agent (Agent1 to Agent5) during the corresponding training process.
Fig.~\ref{fig:exp1-individual} shows the results for the CartPole (top) and LunarLander (bottom) tasks. 
The figures show that for both tasks, FedHQL is able to boost the sample efficiency for most agents.
For example, Agent1 needs to consume over $60\%$ of its budget to reach a performance of 100 in LunarLander with independent self-learning, DQN (w.o. Fed).
If Agent1 participates in FedHQL ($\lambda = 1 $) with the other four agents, its performance can be boosted to around 150 with just $20\%$ 
of its budget.
Similar results can be observed from the other four agents on the LunarLander environment.

Interestingly, FedHQL ($\lambda=0$) renders the performance of Agent1 and Agent3 slightly worse than their corresponding independent self-learning (DQN (w.o. Fed)).
This is because Agent5 (whose policy network is poorly parameterized) fails to learn the CartPole task within the given budget, causing inaccurate knowledge (estimation of the action values) of Agent5 to be sent to the server.
As discussed in Sec.~\ref{sec:fed-ucb-q}, FedUCB with $\lambda = 0$ encourages the server to strongly exploit the group knowledge $\bar{Q}$ which is inaccurate due to the misleading knowledge contributed by Agent5.
As a result, the performances of Agent1 and Agent3 in FedHQL ($\lambda = 0$) deteriorate slightly while their performance is still improved with FedHQL ($\lambda = 1,3,5,10$), which corroborates our theoretical insights on FedUCB (Sec.~\ref{sec:fed-ucb-q}).
Moreover, it is worth noting that FedHQL ($\lambda=0,1,3,5,10$) significantly improves the performance of Agent5 from being completely unlearnable to 
solving the task with nearly maximum return. 
Similar experimental results with more $N = 10$ agents are given in Appendix~\ref{appendix:exp-more-agents}.

\section{Conclusion}\label{sec:discussion}
In this work, we introduce a practical formulation of FedRL with heterogeneous and black-box agents and discuss the unique challenges posed by this novel setting. We have presented principled solutions to these challenges and proposed the FedHQL algorithm.
Due to the difficulty in analyzing the convergence of Q-learning and that heterogeneous agents may utilize arbitrarily non-linear policy parameterization, the convergence of FedHQL and its correlation with the number of agents are not studied in this work, which we consider promising future directions.



\bibliography{example_paper}

\begin{thebibliography}{34}
\providecommand{\natexlab}[1]{#1}
\providecommand{\url}[1]{\texttt{#1}}
\expandafter\ifx\csname urlstyle\endcsname\relax
  \providecommand{\doi}[1]{doi: #1}\else
  \providecommand{\doi}{doi: \begingroup \urlstyle{rm}\Url}\fi

\bibitem[Audibert et~al.(2009)Audibert, Munos, and
  Szepesv{\'a}ri]{audibert2009exploration-TCS}
Audibert, J.-Y., Munos, R., and Szepesv{\'a}ri, C.
\newblock Exploration--exploitation tradeoff using variance estimates in
  multi-armed bandits.
\newblock \emph{Theoretical Computer Science}, 410\penalty0 (19):\penalty0
  1876--1902, 2009.

\bibitem[Bajaj et~al.(2021)Bajaj, Arora, and Hasan]{bajaj2021-blackbox-review}
Bajaj, I., Arora, A., and Hasan, M.~F.
\newblock Black-box optimization: Methods and applications.
\newblock In \emph{Black Box Optimization, Machine Learning, and No-Free Lunch
  Theorems}, pp.\  35--65. Springer, 2021.

\bibitem[Brockman et~al.(2016)Brockman, Cheung, Pettersson, Schneider,
  Schulman, Tang, and Zaremba]{brockman2016openai-gym}
Brockman, G., Cheung, V., Pettersson, L., Schneider, J., Schulman, J., Tang,
  J., and Zaremba, W.
\newblock Openai gym.
\newblock {arXiv}:1606.01540, 2016.

\bibitem[Cesa-Bianchi et~al.(2017)Cesa-Bianchi, Gentile, Lugosi, and
  Neu]{cesa2017boltzmann}
Cesa-Bianchi, N., Gentile, C., Lugosi, G., and Neu, G.
\newblock Boltzmann exploration done right.
\newblock In \emph{Advances in neural information processing systems},
  volume~30, 2017.

\bibitem[Chang \& Maxemchuk(1984)Chang and Maxemchuk]{chang1984reliable}
Chang, J.-M. and Maxemchuk, N.~F.
\newblock Reliable broadcast protocols.
\newblock \emph{ACM Transactions on Computer Systems (TOCS)}, 2\penalty0
  (3):\penalty0 251--273, 1984.

\bibitem[Chen et~al.(2021)Chen, Lu, Rajeswaran, Lee, Grover, Laskin, Abbeel,
  Srinivas, and Mordatch]{chen2021decision-transformer}
Chen, L., Lu, K., Rajeswaran, A., Lee, K., Grover, A., Laskin, M., Abbeel, P.,
  Srinivas, A., and Mordatch, I.
\newblock Decision transformer: Reinforcement learning via sequence modeling.
\newblock \emph{Advances in neural information processing systems},
  34:\penalty0 15084--15097, 2021.

\bibitem[Chen et~al.(2022)Chen, Zhang, Zhang, Wang, and
  Zhu]{chen2022byzantine-DRL}
Chen, Y., Zhang, X., Zhang, K., Wang, M., and Zhu, X.
\newblock Byzantine-robust online and offline distributed reinforcement
  learning.
\newblock \emph{arXiv preprint arXiv:2206.00165}, 2022.

\bibitem[Espeholt et~al.(2018)Espeholt, Soyer, Munos, Simonyan, Mnih, Ward,
  Doron, Firoiu, Harley, Dunning, et~al.]{espeholt2018impala-DRL-3}
Espeholt, L., Soyer, H., Munos, R., Simonyan, K., Mnih, V., Ward, T., Doron,
  Y., Firoiu, V., Harley, T., Dunning, I., et~al.
\newblock Impala: Scalable distributed deep-rl with importance weighted
  actor-learner architectures.
\newblock In \emph{International Conference on Machine Learning}, pp.\
  1407--1416. PMLR, 2018.

\bibitem[Fan et~al.(2021)Fan, Ma, Dai, Jing, Tan, and Low]{fan2021fault-FRL}
Fan, F.~X., Ma, Y., Dai, Z., Jing, W., Tan, C., and Low, B. K.~H.
\newblock Fault-tolerant federated reinforcement learning with theoretical
  guarantee.
\newblock In \emph{Advances in Neural Information Processing Systems}, 2021.

\bibitem[Fujita et~al.(2022)Fujita, Fujimura, Sun, Esaki, and
  Ochiai]{FedRL-building}
Fujita, K., Fujimura, S., Sun, Y., Esaki, H., and Ochiai, H.
\newblock Federated reinforcement learning for the building facilities.
\newblock In \emph{2022 IEEE International Conference on Omni-layer Intelligent
  Systems (COINS)}, pp.\  1--6, 2022.
\newblock \doi{10.1109/COINS54846.2022.9854959}.

\bibitem[Horgan et~al.(2018)Horgan, Quan, Budden, Barth-Maron, Hessel,
  Van~Hasselt, and Silver]{horgan2018-distributed-DRL-4}
Horgan, D., Quan, J., Budden, D., Barth-Maron, G., Hessel, M., Van~Hasselt, H.,
  and Silver, D.
\newblock Distributed prioritized experience replay.
\newblock \emph{arXiv preprint arXiv:1803.00933}, 2018.

\bibitem[Jin et~al.(2022)Jin, Peng, Yang, Wang, and
  Zhang]{jin2022federated-FRL-aistats}
Jin, H., Peng, Y., Yang, W., Wang, S., and Zhang, Z.
\newblock Federated reinforcement learning with environment heterogeneity.
\newblock In \emph{International Conference on Artificial Intelligence and
  Statistics}, pp.\  18--37. PMLR, 2022.

\bibitem[Johnson \& Zhang(2013)Johnson and Zhang]{johnson2013svrg1}
Johnson, R. and Zhang, T.
\newblock Accelerating stochastic gradient descent using predictive variance
  reduction.
\newblock In \emph{Advances in neural information processing systems}, pp.\
  315--323, 2013.

\bibitem[Kairouz et~al.(2021)Kairouz, McMahan, Avent, Bellet, Bennis, Bhagoji,
  Bonawitz, Charles, Cormode, Cummings, et~al.]{kairouz2021advances-FL-2}
Kairouz, P., McMahan, H.~B., Avent, B., Bellet, A., Bennis, M., Bhagoji, A.~N.,
  Bonawitz, K., Charles, Z., Cormode, G., Cummings, R., et~al.
\newblock Advances and open problems in federated learning.
\newblock \emph{Foundations and Trends{\textregistered} in Machine Learning},
  14\penalty0 (1--2):\penalty0 1--210, 2021.

\bibitem[Khodadadian et~al.(2022)Khodadadian, Sharma, Joshi, and
  Maguluri]{khodadadian2022federated-FRL-icml}
Khodadadian, S., Sharma, P., Joshi, G., and Maguluri, S.~T.
\newblock Federated reinforcement learning: Linear speedup under markovian
  sampling.
\newblock In \emph{International Conference on Machine Learning}, pp.\
  10997--11057. PMLR, 2022.

\bibitem[Kone{\v{c}}n{\`y} et~al.(2016)Kone{\v{c}}n{\`y}, McMahan, Ramage, and
  Richt{\'a}rik]{konevcny2016federated-FL-1}
Kone{\v{c}}n{\`y}, J., McMahan, H.~B., Ramage, D., and Richt{\'a}rik, P.
\newblock Federated optimization: Distributed machine learning for on-device
  intelligence.
\newblock \emph{arXiv preprint arXiv:1610.02527}, 2016.

\bibitem[Lattimore \& Szepesv{\'a}ri(2020)Lattimore and
  Szepesv{\'a}ri]{lattimore2020bandit-book}
Lattimore, T. and Szepesv{\'a}ri, C.
\newblock \emph{Bandit algorithms}.
\newblock Cambridge University Press, 2020.

\bibitem[Liang et~al.(2023)Liang, Liu, Chen, Liu, and
  Yang]{liang2019federated-FedRL-Car-YQ}
Liang, X., Liu, Y., Chen, T., Liu, M., and Yang, Q.
\newblock Federated transfer reinforcement learning for autonomous driving.
\newblock In \emph{Federated and Transfer Learning}, pp.\  357--371. Springer,
  2023.

\bibitem[Liu et~al.(2019)Liu, Wang, and Liu]{FedRL-1}
Liu, B., Wang, L., and Liu, M.
\newblock Lifelong federated reinforcement learning: A learning architecture
  for navigation in cloud robotic systems.
\newblock \emph{IEEE Robotics and Automation Letters}, 4\penalty0 (4):\penalty0
  4555--4562, 2019.
\newblock \doi{10.1109/LRA.2019.2931179}.

\bibitem[McMahan et~al.(2017)McMahan, Moore, Ramage, Hampson, and
  y~Arcas]{mcmahan2017communication-FL-0}
McMahan, B., Moore, E., Ramage, D., Hampson, S., and y~Arcas, B.~A.
\newblock Communication-efficient learning of deep networks from decentralized
  data.
\newblock In \emph{Artificial intelligence and statistics}, pp.\  1273--1282.
  PMLR, 2017.

\bibitem[Mnih et~al.(2013)Mnih, Kavukcuoglu, Silver, Graves, Antonoglou,
  Wierstra, and Riedmiller]{mnih2013playing-DQN}
Mnih, V., Kavukcuoglu, K., Silver, D., Graves, A., Antonoglou, I., Wierstra,
  D., and Riedmiller, M.
\newblock Playing atari with deep reinforcement learning.
\newblock {arXiv}:1312.5602, 2013.

\bibitem[Mnih et~al.(2016)Mnih, Badia, Mirza, Graves, Lillicrap, Harley,
  Silver, and Kavukcuoglu]{mnih2016asynchronous-DRL-2}
Mnih, V., Badia, A.~P., Mirza, M., Graves, A., Lillicrap, T., Harley, T.,
  Silver, D., and Kavukcuoglu, K.
\newblock Asynchronous methods for deep reinforcement learning.
\newblock In \emph{International conference on machine learning}, pp.\
  1928--1937. PMLR, 2016.

\bibitem[Nadiger et~al.(2019)Nadiger, Kumar, and Abdelhak]{FedRL-2}
Nadiger, C., Kumar, A., and Abdelhak, S.
\newblock Federated reinforcement learning for fast personalization.
\newblock In \emph{2019 IEEE Second International Conference on Artificial
  Intelligence and Knowledge Engineering (AIKE)}, pp.\  123--127, 2019.
\newblock \doi{10.1109/AIKE.2019.00031}.

\bibitem[Nair et~al.(2015)Nair, Srinivasan, Blackwell, Alcicek, Fearon,
  De~Maria, Panneershelvam, Suleyman, Beattie, Petersen,
  et~al.]{nair2015massively-DRL-1}
Nair, A., Srinivasan, P., Blackwell, S., Alcicek, C., Fearon, R., De~Maria, A.,
  Panneershelvam, V., Suleyman, M., Beattie, C., Petersen, S., et~al.
\newblock Massively parallel methods for deep reinforcement learning.
\newblock \emph{arXiv preprint arXiv:1507.04296}, 2015.

\bibitem[Papini et~al.(2018)Papini, Binaghi, Canonaco, Pirotta, and
  Restelli]{papini2018stochastic-svrpg}
Papini, M., Binaghi, D., Canonaco, G., Pirotta, M., and Restelli, M.
\newblock Stochastic variance-reduced policy gradient.
\newblock In \emph{International conference on machine learning}, pp.\
  4026--4035. PMLR, 2018.

\bibitem[Raffin et~al.(2021)Raffin, Hill, Gleave, Kanervisto, Ernestus, and
  Dormann]{raffin2021-stable-baselines3}
Raffin, A., Hill, A., Gleave, A., Kanervisto, A., Ernestus, M., and Dormann, N.
\newblock Stable-baselines3: Reliable reinforcement learning implementations.
\newblock \emph{Journal of Machine Learning Research}, 2021.

\bibitem[Rieke et~al.(2020)Rieke, Hancox, Li, Milletari, Roth, Albarqouni,
  Bakas, Galtier, Landman, Maier-Hein, et~al.]{rieke2020future-nature}
Rieke, N., Hancox, J., Li, W., Milletari, F., Roth, H.~R., Albarqouni, S.,
  Bakas, S., Galtier, M.~N., Landman, B.~A., Maier-Hein, K., et~al.
\newblock The future of digital health with federated learning.
\newblock \emph{NPJ digital medicine}, 3\penalty0 (1):\penalty0 1--7, 2020.

\bibitem[Sutton \& Barto(2018)Sutton and Barto]{sutton2018reinforcement-RL}
Sutton, R.~S. and Barto, A.~G.
\newblock \emph{Reinforcement learning: An introduction}.
\newblock MIT press, 2018.

\bibitem[Wang et~al.(2020)Wang, Wang, Li, Leung, and
  Taleb]{wang2020federated-FedRL-5}
Wang, X., Wang, C., Li, X., Leung, V.~C., and Taleb, T.
\newblock Federated deep reinforcement learning for internet of things with
  decentralized cooperative edge caching.
\newblock \emph{IEEE Internet of Things Journal}, 7\penalty0 (10):\penalty0
  9441--9455, 2020.

\bibitem[Watkins(1989)]{watkins1989learning-Q-learning}
Watkins, C. J. C.~H.
\newblock Learning from delayed rewards.
\newblock Ph{D thesis, University of Cambridge England}, 1989.

\bibitem[Xue et~al.(2021)Xue, Zhou, Xu, Wang, Xie, Ding, and
  Wen]{xue2021resource-FRL-clinical}
Xue, Z., Zhou, P., Xu, Z., Wang, X., Xie, Y., Ding, X., and Wen, S.
\newblock A resource-constrained and privacy-preserving edge-computing-enabled
  clinical decision system: A federated reinforcement learning approach.
\newblock \emph{IEEE Internet of Things Journal}, 8\penalty0 (11):\penalty0
  9122--9138, 2021.

\bibitem[Yahya et~al.(2017)Yahya, Li, Kalakrishnan, Chebotar, and
  Levine]{yahya2017collective-distributed-RL}
Yahya, A., Li, A., Kalakrishnan, M., Chebotar, Y., and Levine, S.
\newblock Collective robot reinforcement learning with distributed asynchronous
  guided policy search.
\newblock In \emph{2017 IEEE/RSJ International Conference on Intelligent Robots
  and Systems (IROS)}, pp.\  79--86. IEEE, 2017.

\bibitem[Yu et~al.(2020)Yu, Chen, Zhou, Gong, and Wu]{yu2020deep-FedRL-3}
Yu, S., Chen, X., Zhou, Z., Gong, X., and Wu, D.
\newblock When deep reinforcement learning meets federated learning:
  Intelligent multitimescale resource management for multiaccess edge computing
  in 5g ultradense network.
\newblock \emph{IEEE Internet of Things Journal}, 8\penalty0 (4):\penalty0
  2238--2251, 2020.

\bibitem[Zhuo et~al.(2019)Zhuo, Feng, Lin, Xu, and Yang]{zhuo2019federated}
Zhuo, H.~H., Feng, W., Lin, Y., Xu, Q., and Yang, Q.
\newblock Federated deep reinforcement learning.
\newblock {arXiv}:1901.08277, 2019.

\end{thebibliography}
\bibliographystyle{icml2023}

\newpage
\appendix
\onecolumn
\section{Algorithm Pseudocode}\label{appendix:sec:algorithm-pseudocode}
Due to space constraints, the pseudocode for the proposed FedHQL algorithm is deferred here and depicted in Algorithm~\ref{alg:1.a}\&\ref{alg:1.b}. The graphical illustration of FedHQL is also reproduced below (Fig.~\ref{fig:fedHQL-2}).

\begin{figure}[h]
    \centering
    \includegraphics[width=3in] {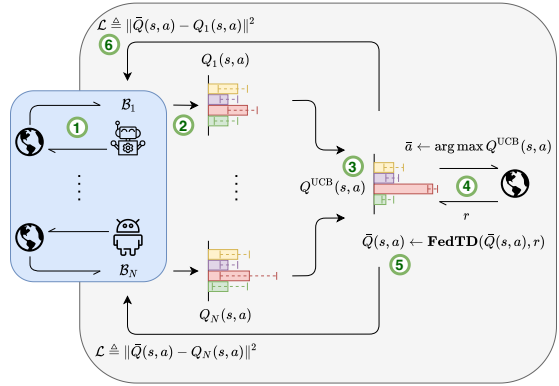}
    \caption{Graphical illustration of FedHQL.}
    \label{fig:fedHQL-2}
\end{figure}
Of note, the above process can be implemented {in asynchronous batches} 
using the reliable broadcast protocol from distributed computing \cite{chang1984reliable}.
\begin{algorithm}[ht]
   \renewcommand\thealgorithm{1.a}
   \caption{FedHQL {(central server)}}
   \label{alg:1.a}
 \renewcommand{\algorithmicrequire}{\textbf{Input:}}
\renewcommand{\algorithmicensure}{\textbf{Output:}}
\begin{algorithmic}[1]
  \REQUIRE query state $s_0$, inter-agent exploration constant $\lambda$, {maximum federation time horizon $H_\text{fed}$, FedTD learning rate $\alpha_\text{s}$}
   \WHILE{a total budget of interactions is not exhausted}
\STATE // Begin self-learning process
\STATE Let each agent perform self-learning
\STATE // Begin federation process
    \STATE $t \gets 0$
        \WHILE{$s_{t}$ is not terminating \textbf{And} $t < H_\text{fed}$}
            \STATE {{Broadcast} $s_{t}$ to all agents; {Receive} $\{Q_n(s_t,a),\forall a\}_{n=1}^N$
            \STATE Compute $\bar{Q}(s_t,a), {Q}^{\text{std}}(s_t,a), Q^{\text{UCB}}(s_t,a), \forall a$ 
            according to Eq.~\eqref{eq:Q-mean}, \eqref{eq:Q-std}, \eqref{eq:fed-AUCB} } 
            \STATE $\bar{a}_t \leftarrow \arg\max_{a} Q^{\text{UCB}}(s_t,a)$
            \STATE Sample $(s_t,\bar{a}_t,s_{t+1},r_t)$ by executing $\bar{a}_t$
            \STATE {{Broadcast} $s_{t+1}$ to all agents; {Receive} $\{Q_n(s_{t+1}, \forall a)\}_{n=1}^N$}
            \STATE Perform $ \textbf{FedTD}(\bar{Q}(s_t,\bar{a}_t), r_t, \alpha_\text{s})$ according to Eq.~\eqref{eq:fed-TD}
            \STATE {{Broadcast} $\bar{Q}(s_t,\bar{a}_t)$ to all agents; {Wait} for agents to perform individual improvement
            \STATE $t \gets t + 1$} 
        \ENDWHILE
    \ENDWHILE
\end{algorithmic}
\end{algorithm}

\begin{algorithm}[!ht]
\renewcommand\thealgorithm{1.b}
   \caption{FedHQL (agent $n$)}
   \label{alg:1.b}
\renewcommand{\algorithmicrequire}{\textbf{Input:}}
\renewcommand{\algorithmicensure}{\textbf{Output:}}
\begin{algorithmic}[1]
   \REQUIRE arbitrary network $Q_n$ parameterized by $\theta_n$, intra-agent exploration coefficient $\epsilon_n$, {individual learning rate $\alpha_n$, number of gradient steps $\kappa$}
{ 
    \IF{Self-learning signal received from server}
   \STATE Perform self-learning: $Q_n \leftarrow$ Self-learning($Q_n$, $\epsilon_n$, $\alpha_n$)
   \ENDIF
   \IF{$\bar{Q}(s_t,\bar{a}_t)$ received from server}
        \STATE Update $Q_n$ according to Eq.~\eqref{eq:individual-improvement-loss} \& \eqref{eq:individual-improvement-update} for  $\kappa$ steps
   \ENDIF
   \IF{$s_t$ received from server}
        \STATE Answer query $Q_n$ at received $s_t$ and send it to central server
   \ENDIF
   }
\end{algorithmic}
\end{algorithm}
\section{Experiments with More Agents}\label{appendix:exp-more-agents}
We repeat the same sets of experiments with $N=10$ agents, whose configurations are depicted in Table~\ref{tab:configuration-2}, to further verify the efficacy of FedHQL in improving the system welfare and its effectiveness in improving the sample efficiency of individual participating agents. 
\begin{table}[h]
\centering
  \caption{Configurations of $N=10$ agents}
  \label{tab:configuration-2}
  \resizebox{0.45\textwidth}{!}{%
  \begin{tabular}{cccc}\toprule
    \textit{Agent} & \textit{Network} &  \textit{Learning rates} &  \textit{Intra-exploration coefficient} \\ \midrule
    1 & 64x64 (ReLU) & 0.01 & 0.01 \\
    2 & 128x128 (ReLU) & 0.1 & 0.1 \\
    3 & 32x32 (Tanh) & 0.01 & 0.05 \\
    4 & 16x16 (ReLU) & 0.01 & 0.01 \\
    5 & 8x8x8 (ReLU) & 0.01 & 0.01 \\ 
    6 & 64x64 (Tanh) & 0.02 & 0.01 \\
    7 & 128x128 (ReLU) & 0.02 & 0.1 \\
    8 & 32x32 (ReLU) & 0.02 & 0.05 \\
    9 & 16x16 (Tanh) & 0.05 & 0.01 \\
    10 & 8x8x8 (Tanh) & 0.05 & 0.01 \\ \bottomrule
  \end{tabular}
  }
\end{table}

\subsection{Efficacy of FedHQL ($N=10$ agents) in Improving System
Welfare}
Similar conclusion from Sec.~\ref{subsec:exp-object-1} and Sec.~\ref{subsec:exp-object-2} can also be drawn from both the two tasks.
For example, 
in the CartPole task,
an agent needs to sample at least $20\%$ of its budget (i.e., total $0.4\text{m} = 2\times 10^6 \times0.2$ interactions) on average to receive an average return of 300 while an agent in FedHQL ($\lambda=1,3$) can achieve a performance close to 400 within the same consumption of $20\%$ of its budget.
Also in the LunarLander task,
an agent is expected to consume at least $20\%$ of its budget (i.e., total $0.8\text{m} = 4\times 10^6 \times0.2$ interactions) on average to receive positive returns while an agent in FedHQL ($\lambda=3$) can achieve a performance close to 100 within the same consumption of $20\%$ of its budget.
These performance improvements over the $N=10$ heterogeneous and black-box agents further verify the empirical performance of FedHQL in improving the sample efficiency of RL agents from the system perspective.
\begin{figure}[h]
    \centering
    \includegraphics[width=3.3in]{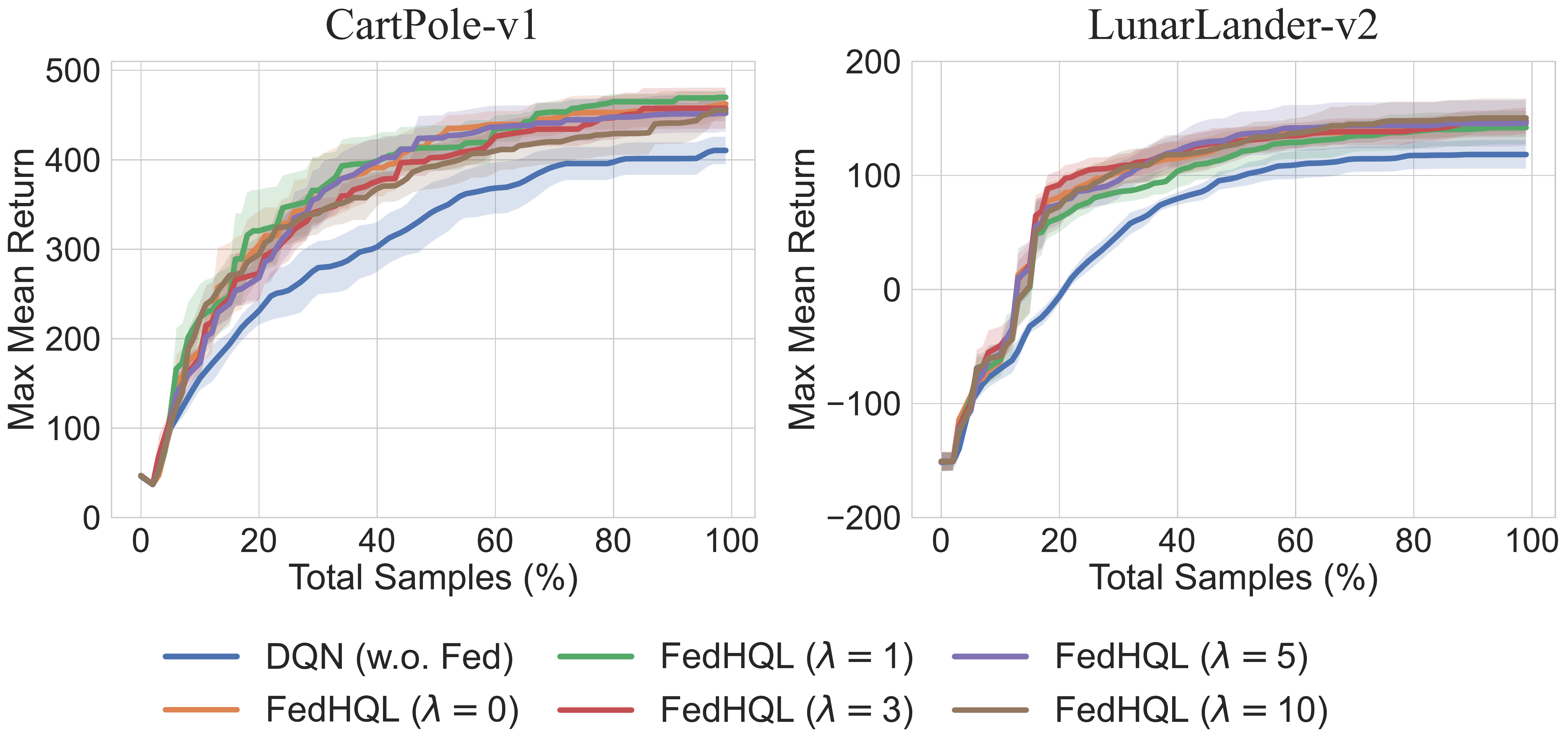}
    \caption{Learning curves of FedHQL ($N=10$ agents) with different inter-agent exploration coefficients (FedHQL ($\lambda\!=\!0,1,3,5,10$) against independent self-learning of agents (DQN (w.o. Fed)) on the CartPole (\emph{left}) and LunarLander (\emph{right}) environments. }
    \label{fig:exp2}
\end{figure}

\subsection{Effectiveness of FedHQL ($N=10$ agents) in Improving
Individual Agents}

\begin{figure*}
    \centering
  \includegraphics[width=\textwidth] {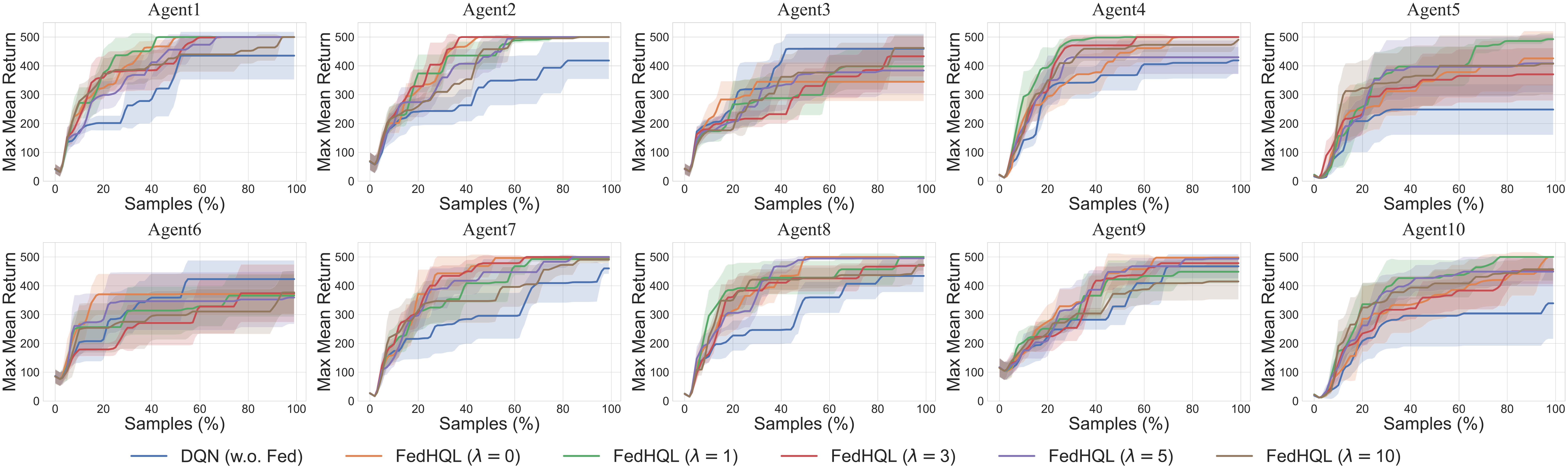}
    \caption{Individual learning curves of each agent running FedHQL ($N=10$ agents) with different inter-agent exploration coefficients (FedHQL ($\lambda\!=\!0,1,3,5,10$) against independent training (DQN (w.o. Fed)) on the CartPole environment.}
    \label{fig:exp2-cartpole}
\end{figure*}
Fig.~\ref{fig:exp2-cartpole} and Fig.~\ref{fig:exp2-lunarlander} show the results used to verify the effectiveness of FedHQL in boosting the sample efficiency of each participating agent with high probability, i.e., the second objective of FedRL-HALE 
(Sec.~\ref{subsec:problem-formulation}), for CartPole and LunarLander, respectively. 
Interestingly, FedHQL ($\lambda=0$) renders the performance of Agent3 slightly worse than their corresponding independent self-learning (DQN (w.o. Fed)).
This is because Agent5 and Agent 10 (whose policy networks are poorly parameterized) fail to learn the CartPole task within the given budget, causing inaccurate knowledge (estimation of the action values) of Agent5 and Agent 10 to be sent to the server.
As discussed in Sec.~\ref{sec:fed-ucb-q}, FedUCB with $\lambda = 0$ encourages the server to strongly exploit the group knowledge $\bar{Q}$ which is inaccurate due to the misleading knowledge contributed by Agent5 and Agent 10.
As a result, the performances of Agent3 in FedHQL ($\lambda = 0$) deteriorate slightly in the early training stage.
Moreover, it is worth noting that FedHQL ($\lambda=0,1,3,5,10$) significantly improves the performance of Agent5 and Agent10 from being completely unlearnable to 
solving the task with nearly maximum return, which further verifies the effectiveness of FedHQL in improving individual agents with high probability.


\begin{figure*}
    \centering
    \includegraphics[width=\textwidth] {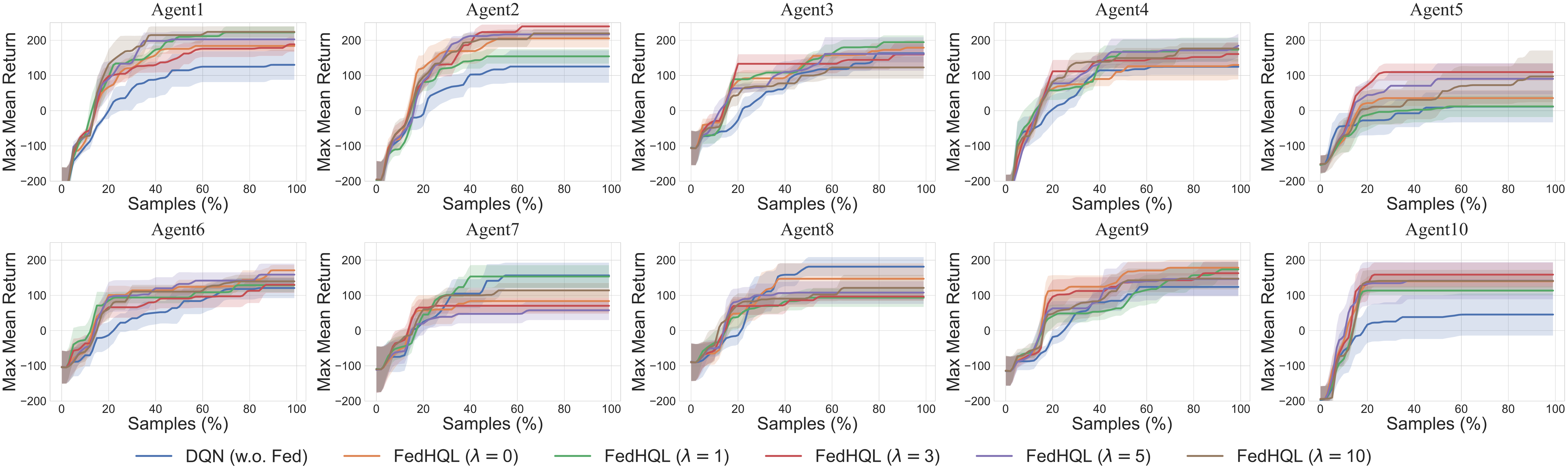}
    \caption{Individual learning curves of each agent running FedHQL ($N=10$ agents) with different inter-agent exploration coefficients (FedHQL ($\lambda\!=\!0,1,3,5,10$) against independent training (DQN (w.o. Fed)) on the LunarLander environment.
    }
    \label{fig:exp2-lunarlander}
\end{figure*}

\end{document}